\documentclass{article} % For LaTeX2e
\usepackage{iclr2025_delta,times}

\usepackage[ruled,vlined]{algorithm2e}
\usepackage{times}
\usepackage{soul}
\usepackage{url}
\usepackage[hidelinks]{hyperref}
\usepackage[utf8]{inputenc}
\usepackage[small]{caption}
\usepackage{graphicx}
\usepackage{amsmath}
\usepackage{amsthm}
\usepackage{booktabs}
\usepackage[switch]{lineno}
\usepackage{xcolor}
\usepackage{amsfonts}
\usepackage{wasysym}
\usepackage{subcaption}
\usepackage{hyperref}
\usepackage{url}
\usepackage{wrapfig}
\usepackage{setspace}
\usepackage{makecell}

\makeatletter
\renewcommand{\algocf@linesnumbered}{\small}
\makeatother

\newtheorem{proposition}{Proposition}

\def\defeq{\stackrel{\text{def}}{=}}

\DeclareMathOperator{\proba}{\mathbb{P}}

\newcommand{\jproba}[2]{\proba_{#1,#2}}
\newcommand{\mproba}[2]{\proba_{#1} \! \otimes \proba_{#2}}

\DeclareMathOperator{\expect}{\mathbb{E}} % Математическое ожидание.
\DeclareMathOperator{\MI}{\mathsf{I}}

\DeclareMathOperator*{\KLoperator}{\mathsf{KL}}
\newcommand{\KL}[2]{\KLoperator \left[ #1 \, \Vert \, #2 \right]}

\def\PMF{\pi}

\def\ourestname{DBMI}

\DeclareMathOperator*{\argmin}{arg\,min}
\usepackage{amsmath,amsfonts,bm}

\def\eqref#1{equation~\ref{#1}}
\def\1{\bm{1}}

\DeclareMathAlphabet{\mathsfit}{\encodingdefault}{\sfdefault}{m}{sl}
\SetMathAlphabet{\mathsfit}{bold}{\encodingdefault}{\sfdefault}{bx}{n}

\title{Discrete Bridges for Mutual Information \\ Estimation}

\author{%
  Iryna Zabarianska\thanks{Equal contribution. \\ \hspace*{0.1mm} $\quad \dagger$ Moscow Independent Research Institute of Artificial Intelligence \\ \hspace*{0.1mm} $\quad$ Corresponding author: \texttt{<kholkinsd@gmail.com>}} \\
  MIRAI$^\dagger$, Russia \\
  \And
  Sergei Kholkin$^*$  \\
  Applied AI Institute, Russia \\
  \And
  Grigoriy Ksenofontov \\
  Applied AI Institute, Russia, \\
  MIRAI$^\dagger$, Russia
  \And
  Ivan Butakov \\
  MIRAI$^\dagger$, Russia, \\
  Applied AI Institute, Russia, \\
  Institute of Numerical Mathematics (RAS), \hspace{-5mm} \\
  Russia \\
  \And
  Alexander Korotin \\
  Applied AI Institute, Russia, \\
  AXXX, Russia \\
}

\iclrfinalcopy % Uncomment for camera-ready version, but NOT for submission.
\begin{document}

\maketitle

\vspace{-8mm}
\begin{abstract}
Diffusion bridge models in both continuous and discrete state spaces have recently become powerful tools in the field of generative modeling. In this work, we leverage the discrete state space formulation of bridge matching models to address another important problem in machine learning and information theory: the estimation of the mutual information (MI) between discrete random variables.
By neatly framing MI estimation as a \textit{domain transfer} problem, we construct a Discrete Bridge Mutual Information (\textbf{DBMI}) estimator suitable for discrete data, which poses difficulties for conventional MI estimators. We showcase the performance of our estimator on two MI estimation settings: low-dimensional and image-based.
\end{abstract}

\vspace{-6mm}
\section{Introduction}
\vspace{-2mm}

\paragraph{Mutual Information (MI)} is a fundamental measure of nonlinear statistical dependence between two random vectors,
defined as the Kullback-Leibler divergence between the joint distribution $\jproba{X_0}{X_1}$ and the product of marginals $\mproba{X_0}{X_1}$ \cite{polyanskiy2024information_theory}:

% \vspace{-3mm}
\[
  \MI(X_0;X_1) = \KL{\jproba{X_0}{X_1}}{\mproba{X_0}{X_1}}
\]
% \vspace{-3mm}

Due to several outstanding properties, such as nullification only under statistical independence,
invariance to invertible transformations, and ability to capture non-linear dependencies,
MI is used extensively for theoretical analysis of overfitting \cite{asadi2018tightening_MI_gen_bounds},
hypothesis testing \cite{duong2022independence_testing},
feature selection
\cite{peng2005MI_feature_selection}
representation learning \cite{hjelm2018deep_infomax,butakov2025DIM_DM},
and studying mechanisms behind generalization in deep neural nets \cite{goldfeld2019estimating_information_flow_DNNs,butakov2024lossy_compression}. 
% shwartz_ziv2017opening_black_box,
% \vspace{-1mm}
While less explored compared to the continuous case, MI estimation between \textbf{discrete random vectors} is often leveraged in fields such as bioinformatics~\cite{newcomb2021coding_regions,xia2024peptide_sequencing}, neuroscience~\cite{chai2024brain}, and natural language processing~\cite{darrin2024cosmic}.
Despite conventional non-parametric methods offering strong baselines in discrete setups, they typically degrade as data dimensionality and alphabet size increase~\cite{pinchas2024discrete_comparison}.
This motivates the development of novel neural-based discrete MI estimators capable of capturing intricate interdependencies in complex high-dimensional settings.

% \sergei{Some words about the MI estimators? MI paragraph specially for finite stat space random variables?}

% \textcolor{red}{[Information Theory and Mutual Information, Ivan]}

\vspace{-2mm}
\paragraph{Bridge Matching.}

Diffusion models are a powerful type of generative models that show an impressive quality of data generation on continuous state spaces \cite{ho2020denoising}. However, they have some disadvantages, such as the inability to perform data-to-data translation via diffusion. To tackle this problem, a novel promising approach based on Reciprocal Processes  \cite{leonard2014reciprocal} and Schr\"{o}dinger Bridges theory \cite{de2021diffusion} has emerged. This approach is known as \textit{diffusion bridge matching} and is utilized for learning generative models as diffusion processes for \textit{domain translation}. This type of model has shown itself to be a powerful approach for many applications in biology \cite{bunne2023schrodinger}, chemistry \cite{somnath2023aligned}, computer vision \cite{liu2023_i2isb}, unpaired learning \cite{gushchin2024light}, and mutual information estimation \cite{kholkin2025infobridge}.

 While the approach was introduced for the continuous state space $\mathbb{R}^D$ it has recently been generalized to discrete state spaces, i.e., \textit{discrete bridge matching}, \cite{ksenofontovcategorical}. Such generalization proved itself useful in many applications: unpaired learning \cite{ksenofontovcategorical,kimdiscrete}, text generation \cite{gat2024discrete}, and drug design \cite{igashovretrobridge}.
 
% \textcolor{red}{[Add MI for finite state space variables intro and problem statement]}

\vspace{-2mm}
\paragraph{Contributions.} In this work, we tackle the problem of estimating the MI between discrete random variables of the same state space. We propose a novel MI estimator based on reciprocal processes and their representation as Markov chains (§\ref{sec:main_theory}). Leveraging this theoretical framework and the powerful generative methodology of Bridge Matching for discrete data domain transfer, we then develop a practical algorithm for MI estimation, named \ourestname{} (§\ref{sec:algorithm}). We evaluate our method on two benchmarks: a low-dimensional benchmark and a high-dimensional image-based benchmark (§\ref{sec:experiments}). The latter, introduced in this work, extends the evaluation to more complex discrete state spaces and is specifically designed to test the scalability of MI estimators while still providing access to ground-truth MI (\wasyparagraph\ref{sec:image_bench}). To our knowledge, it is one of the first to combine scalability with known ground truth in high-dimensional discrete settings. We further demonstrate the empirical superiority of our method over existing MI estimators suitable for discrete spaces.

\vspace{-2mm}
\paragraph{Notation.} We consider discrete state spaces of the form $\mathcal{X} = \mathbb{S}^D$, where $\mathbb{S} = {1, 2, \dots, S}$ denotes a categorical space and $D$ is the number of dimensions. This formulation is general and widely used in practice. 
% For simplicity, we refer to this space as $\mathcal{X}$ throughout the paper. 
Consider a \emph{time set} $\{t_n\}_{n=0}^{N+1}$, where $0=t_{0}<t_{1}<\dots<t_{N} < t_{N+1}=1$ are $N \geq 1$ time moments. The space $\mathcal{X}^{N+2}$ is referred to as the \emph{path space} and represents all possible trajectories $(x_0, x_{\text{in}}, x_{t_{N+1}})$, where $x_{\text{in}} \defeq (x_{t_1}, \dots, x_{t_N})$ corresponds to the intermediate states. Let $\mathcal{P}(\mathcal{X}^{N+2})$ be the space of probability distributions over paths. Each $r\in\mathcal{P}(\mathcal{X}^{N+2})$ can be interpreted as a discrete in time $\mathcal{X}$-valued stochastic process. We use $r(x_0, x_{\text{in}}, x_{t_{N+1}})$ to denote its probability mass function (PMF) at $(x_0, x_{\text{in}}, x_{t_{N+1}}) \in \mathcal{X}^{N+2}$ and use $r(\cdot|\cdot)$ to denote its conditional distributions, e.g., $r(x_1|x_0)$, $r(x_{\text{in}}|x_0,x_1)$. Finally, we introduce $\mathcal{M}(\mathcal{X}^{N+2}) \subset \mathcal{P}(\mathcal{X}^{N+2})$ as the set of all \emph{Markov processes} $r$, i.e., those processes which satisfy the equality $r(x_0, x_{\text{in}}, x_{t_{N+1}})=r(x_0)\prod_{n=1}^{N+1}r(x_{t_n}|x_{t_{n-1}})$.
To denote the probability mass function of $r\in\mathcal{P}(\mathcal{X}^{N+2})$ at a point $x_{t_n}\in\mathcal{X}$ at time $t_{n}$, we use $r(x_{t_n})$. 
We write $\KL{\cdot}{\cdot}$ to denote the Kullback-Leibler divergence between two distributions or stochastic processes. 

\vspace{-2mm}
\section{Background}
\vspace{-2mm}

In this section, we deliver background information on Mutual Information (\wasyparagraph\ref{sec:mi_background}) and explain concepts vital to our main result: Reciprocal processes (\wasyparagraph\ref{sec:reciprocal_processes}), their Markov chain representations (\wasyparagraph\ref{sec:reciprocal_cond} and \wasyparagraph\ref{sec:reciprocal_Markov_chain}) and the concept of Bridge Matching for discrete state spaces (\wasyparagraph\ref{sec:cat_BM}). 

% \alex{some explanaition what is going on in this section. Solved}

% \alex{D capital or d small everywhere. Synchronize}

\vspace{-2mm}
\subsection{Mutual Information}\label{sec:mi_background}
\vspace{-2mm}

%\alex{too much, too formal (especially for discrete variables). Do we consider the same state space? Simplify as much as possible. Not R but S, etd. DO we define this as mathhP or pi? may be synchronize with infobridge?}
% \textcolor{red}{[Probably a copy from InfoBridge]}

% Let $(\Omega, \setfamily, \proba)$ be a probability space with sample space $\Omega$, $\sigma$-algebra $\setfamily$,
% and probability measure $\proba$ defined on $\setfamily$.
% Consider random vectors $X \colon \Omega \to \reals^{D}$ and $Y \colon \Omega \to \reals^{D}$ with joint distribution $\jproba{X}{Y}$ and marginals $\proba_X$ and $\proba_Y$, respectively.
% Wherever it is needed, we assume the relevant Radon-Nikodym derivatives exist.
% For any probability measure $\qroba$ that is absolutely continuous w.r.t. $\proba$ (denoted $\qroba \ll \proba$),
% the Kullback-Leibler (KL) divergence is $\KL{\qroba}{\proba} = \expect_{\qroba} \left[\log \frac{\dif \qroba}{\dif \proba}\right]$,
% which is non-negative and vanishes if and only if (iff) $\proba = \qroba$.
% For discrete $X_0 \colon \Omega \to \mathcal{X}_0$ and $X_1 \colon \Omega \to \mathcal{X}_1$,
% the mutual information (MI) between $X_0,X_1$ quantifies the divergence between the joint distribution and the product of marginals:

For discrete $\mathcal{X}$-valued random variables $X_0, X_1$, with existent joint probability mass function $\pi(x_0, x_1)$,
the mutual information (MI) quantifies the KL divergence between the joint distribution and the product of marginals:
\vspace{-2mm}
\begin{gather}
  \MI(X_0;X_1) = \expect_{\pi(x_0, x_1)} \log \frac{\PMF(x_0,x_1)}{\PMF(x_0) \PMF(x_1)} = \KL{\PMF(x_0,x_1)}{\PMF(x_0) \PMF(x_1)} \label{eq:mi_def} 
\end{gather}

%If $\proba_X$ is discrete with PMF $\PMF(X)$, the Shannon's entropy of $X$ is defined as $\sent(X) = - \expect[\log \PMF(X)]$, where $\log(\blankarg)$ denotes the natural logarithm. Likewise, the joint entropy $\sent(X, Y)$ is defined via the joint PMF $\PMF(X, Y)$, and conditional entropy is $\sent(X \mid Y) = - \expect[\log \PMF(X \mid Y)] = - \expect_Y [\expect_{X \mid Y} \log \PMF(X \mid Y)]$. 
%In this case, MI satisfies
%\begin{equation}
%    \begin{aligned}
%        \MI(X;Y) &= \sent(X) - \sent(X \mid Y)
%        = \sent(Y) - \sent(Y \mid X) \\
%        &= \sent(X) + \sent(Y) - \sent(X,Y).
%    \end{aligned}
%\end{equation}
Mutual information is zero if and only if $X_0$ and $X_1$ are independent.
It is also invariant under injective transforms and admits many other noticeable properties:
data processing inequality, chain rule, etc.~\cite{polyanskiy2024information_theory}.

% \sergei{Add some MI properties here}

% \sergei{TODO: change \textbackslash PMF macro.}

\subsection{Reciprocal Processes}\label{sec:reciprocal_processes}
% \alex{rephrase with deepseek, too similar to infobridge}
% \textcolor{red}{[This section would be more complicated then in  InfoBridge]}
Reciprocal processes form a class of stochastic processes that have recently attracted increasing interest in various domains, including stochastic optimal control~\cite{leonard2014reciprocal}, Schrödinger bridge problems~\cite{de2021diffusion}, and diffusion-based generative modeling~\cite{gushchin2024light}. Although traditionally defined in continuous time, recent works have adopted discrete-time formulations~\cite{gushchin2024adversarial,ksenofontovcategorical}, favoring their simplicity and modeling flexibility. In this work, we consider a discrete-time reciprocal process, defined via a reference Markov process $q^{\rm ref} \in \mathcal{M}(\mathcal{X}^{N+2})$.

% In this work, we consider a discrete-time reciprocal process, defined via a reference Markov process $q^{\rm ref} \in \mathcal{M}(\mathcal{X}^{N+2})$.

Consider a distribution $\pi(x_0, x_1)\in \mathcal{P}(\mathcal{X}^2)$, reference process $q^{\rm ref}$ and define the discrete in time stochastic process $r_\pi$:
\begin{equation}
    r_\pi(x_0, x_{\rm in}, x_1) \defeq q^{\rm ref}(x_{\rm in}|x_0, x_1)\pi(x_0, x_1). \label{eq:reciprocal_def}
\end{equation}

% , which is consistent with Doob'h transform continuous in time methodology \cite{palmowski2002technique}.

The process $r_\pi$ can be interpreted as the reference process $q^{\rm ref}$ constrained to have marginal distribution $\pi(x_0, x_1)$ at its endpoints, while the inner part, i.e., conditioned on start $x_0$ and end $x_1$, reference process $q^{\rm ref}(x_{\rm in}|x_0, x_1)$, is also known as a \textit{bridge}. In continuous time, this construction aligns with the classical Doob $h$-transform  framework~\cite{palmowski2002technique}. Due to the non-causal nature of trajectory formation, process $r_\pi$ is, in general, not Markov. The set of all such $r_\pi$ for a particular $ q^{\rm ref}$ can be described as:
\begin{equation*}
    \mathcal{R}(q^{\rm ref})\!=\!\{r\in\mathcal{P}(\mathcal{X}^{N+2}) \; s.t.  \; \exists \pi \in \mathcal{P}(\mathcal{X}^2): r = r_\pi\}.
\end{equation*}
and we call it a set of \textbf{reciprocal processes} for $q^{\rm ref}$.

\vspace{-2mm}
\subsection{Reciprocal Processes Conditioned on a Point}\label{sec:reciprocal_cond}
\vspace{-2mm}

% \alex{check that all titles start with capital letters (all words)}

Consider a reciprocal process $r_\pi$ and its corresponding conditional process $r_\pi(x_{\rm in}, x_1|x_0) = r_{\pi|x_0}(x_{\rm in}, x_1)$.
% , which also remains reciprocal. 
One can be observed, process $r_{\pi|x_0}(x_{\rm in}, x_1)$ is Markov:
\begin{proposition}
    Consider the reciprocal process conditioned on point $x_0$,  $r_{\pi|x_0}(x_{\rm in}, x_1)$. Then $r_{\pi|x_0}(x_{\rm in}, x_1)$ is Markov:
    \vspace{-4mm}
    \begin{equation}
        r_{\pi|x_0}(x_{\rm in}, x_1)= \prod_{n=1}^{N+1}r_{\pi|x_0}(x_{t_{n}}|x_{t_{n-1}}) \label{eq:cond_reciprocal_Markovian}
    \end{equation}
\end{proposition}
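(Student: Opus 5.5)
We will prove the claim by writing the conditional process explicitly and showing it factorizes as a product of one-step kernels, each depending only on the previous state. The starting point is the definition \eqref{eq:reciprocal_def}. Dividing by $\pi(x_0)$ gives
\begin{equation*}
r_{\pi|x_0}(x_{\rm in},x_1)=\pi(x_1|x_0)\,q^{\rm ref}(x_{\rm in}|x_0,x_1)=\pi(x_1|x_0)\,\frac{q^{\rm ref}(x_0,x_{\rm in},x_1)}{q^{\rm ref}(x_0,x_1)} .
\end{equation*}
Because $q^{\rm ref}$ is Markov, the numerator equals $q^{\rm ref}(x_0)\prod_{n=1}^{N+1}q^{\rm ref}(x_{t_n}|x_{t_{n-1}})$, and the denominator equals $q^{\rm ref}(x_0)\,q^{\rm ref}(x_1|x_0)$. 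The factor $q^{\rm ref}(x_0)$ therefore cancels. Throughout we assume $q^{\rm ref}(x_1|x_0)>0$ wherever $\pi(x_0,x_1)>0$; otherwise the bridge is undefined.

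\textbf{Key step: a Doob $h$-transform in discrete time.} For fixed $x_0$, define
\begin{equation*}
h_n(x_{t_n})\defeq\sum_{x_1}\frac{\pi(x_1|x_0)}{q^{\rm ref}(x_1|x_0)}\,q^{\rm ref}(x_1|x_{t_n}),\qquad h_{N+1}(x_1)=\frac{\pi(x_1|x_0)}{q^{\rm ref}(x_1|x_0)},\qquad h_0(x_0)=1 .
\end{equation*}
By Chapman--Kolmogorov for the Markov process $q^{\rm ref}$, these satisfy the harmonic relation
\begin{equation*}
h_{n-1}(y)=\sum_z q^{\rm ref}(z|y)\,h_n(z).
\end{equation*}
We then set, for $n=1,\dots,N+1$,
\begin{equation*}
k_n(x_{t_n}|x_{t_{n-1}})\defeq q^{\rm ref}(x_{t_n}|x_{t_{n-1}})\,\frac{h_n(x_{t_n})}{h_{n-1}(x_{t_{n-1}})} .
\end{equation*}
The harmonic relation shows that each $k_n$ is a genuine transition kernel (it sums to one over $x_{t_n}$). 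In the product $\prod_n k_n$ the ratios of $h$ telescope to $h_{N+1}(x_1)/h_0(x_0)=\pi(x_1|x_0)/q^{\rm ref}(x_1|x_0)$. Hence
\begin{equation*}
\prod_n k_n = r_{\pi|x_0}(x_{\rm in},x_1),
\end{equation*}
so $r_{\pi|x_0}$ is the Markov chain started at $x_0$ with kernels $k_n$.

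\textbf{Identifying the kernels.} It remains to check that $k_n$ coincides with the conditional $r_{\pi|x_0}(x_{t_n}|x_{t_{n-1}})$ appearing in \eqref{eq:cond_reciprocal_Markovian}. This follows because a distribution that factorizes as a product of normalized kernels along the chain has exactly those kernels as its one-step conditionals. To see it, marginalize the future variables, which sum to one by normalization, to obtain the joint of $(x_{t_1},\dots,x_{t_n})$; dividing by the joint up to $n-1$ then yields $k_n$.

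\textbf{Main obstacle.} The step needing the most care is handling zeros: states where $h_{n-1}=0$, or pairs with $q^{\rm ref}(x_1|x_0)=0$. On such states the conditionals are undefined, but these states carry zero mass under $r_{\pi|x_0}$. Defining $k_n$ arbitrarily there leaves the identity intact almost surely, so the factorization still holds.
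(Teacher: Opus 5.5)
Your proposal is correct and follows essentially the same route as the paper: a discrete-time Doob $h$-transform with the same function $h_n(x_{t_n})=\sum_{x_1}\frac{\pi(x_1|x_0)}{q^{\rm ref}(x_1|x_0)}\,q^{\rm ref}(x_1|x_{t_n})$ and the same transition $q^{\rm ref}(x_{t_n}|x_{t_{n-1}})\,h_n(x_{t_n})/h_{n-1}(x_{t_{n-1}})$. The paper reads this kernel off directly as the ratio $r_\pi(x_{t_1:t_k}|x_0)/r_\pi(x_{t_1:t_{k-1}}|x_0)$ of successive marginals, whereas you build the chain from the kernels, check normalization via the harmonic relation and telescoping, and then identify the conditionals, treating zero-mass states more carefully than the paper does.
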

\vspace{-2mm}
    
\noindent See Appendix~\ref{appx:proofs} for proof. Conditioning of reciprocal process on the start point can be generalized to reciprocal process starting from $\delta(x_0)$, i.e., Dirac delta distribution, and is a rare case of a stochastic process when both Markov and reciprocal properties are present. This also holds in continuous time, where the processes are known as Schrödinger–Föllmer processes~\cite{vargas2023bayesian_follmer}.

% This is also a known fact for the continuous in time formulation and such processes are known as Schr\"{o}dinger-F\"{o}llmer processes \cite{vargas2023bayesian_follmer}.
% \alex{Fix all half-lines}

\vspace{-2mm}
\subsection{Reciprocal Processes as Conditioned Markov Chains}\label{sec:reciprocal_Markov_chain}
\vspace{-2mm}

Let us note that one can naturally represent the whole $r_\pi$ as the mixture of $r_{\pi|x_0}$ over initial states $x_0$ and decompose  $r_{\pi|x_0}$ as the Markov chain, see \eqref{eq:cond_reciprocal_Markovian}:

\vspace{-3mm}
\begin{gather}
    r_\pi(x_0, x_{\rm in}, x_1) =  \pi(x_0) \prod_{n=1}^{N+1}r_{\pi|x_0}(x_{t_{n+1}}|x_{t_{n}}) = \pi(x_0) \prod_{n=1}^{N+1}r_{\pi} (x_{t_{n+1}}|x_{t_{n}}, x_0) \label{eq:full_reciprocal_Markovian}
\end{gather}
\vspace{-3mm}

% \alex{sum over what? here and everywhere else}
This represents $r_\pi$ as a \textit{Markov process family} conditioned on $x_0$, i.e., $r_{\pi|x_0}$. In that light, if one does know all the transition probabilities $r_\pi(x_{t_{n+1}}|x_{t_{n}}, x_0)$ and marginal $\pi(x_0)$, then one can sample from $r_{\pi}$ by performing the inference of Markov chain.

\vspace{-2mm}
\subsection{Bridge Matching for Discrete State Spaces}\label{sec:cat_BM}
\vspace{-2mm}

The $r_{\pi|x_0}$ admits a Markov chain formulation as given in~\eqref{eq:cond_reciprocal_Markovian} and it has analytic form:
\begin{equation}
    r_{\pi|x_0}(x_{t_n}| x_{t_{n-1}}, x_0)\!=\!\mathbb{E}_{\pi(x_1|x_0)}\!\left[ q^{\rm ref}(x_{t_{n}}|x_{t_{n-1}}, x_1) \right]\!, \label{eq:Markov_tranisiont_analytic}
\end{equation}
where $ q^{\rm ref}(x_{t_{n}}|x_{t_{n-1}}, x_1)$ is also known as \textit{posterior} and is usually analytically known and easily computable in exact form \cite{austin2021structured}. However, the expectation in \eqref{eq:Markov_tranisiont_analytic} is taken with respect to $\pi(x_1|x_0)$, which in practice is typically available only through empirical data samples; consequently, an exact evaluation of \eqref{eq:Markov_tranisiont_analytic} is rarely feasible.

Fortunately, they can be recovered as the solution to the following problem, \cite[Prop.~3.3]{ksenofontovcategorical}:
\vspace{-5mm}
\begin{multline}
    \label{eq:opt_reciprovcal_cond}
    r_{\pi|x_0}=\argmin_{s}  \mathbb{E}_{\pi(x_1|x_0)}\big[ \sum_{n=1}^{N}\mathbb{E}_{q^{\rm ref}(x_{t_{n-1}}|x_0, x_1)} \\
    \KL{q^{\rm ref}(x_{t_n}|x_{t_{n-1}}, x_1)}{s(x_{t_n}|x_{t_{n-1}})} - \mathbb{E}_{q^{\rm ref}(x_{t_N}|x_0, x_1)}[\log s(x_1|x_{t_N})] \big],
\end{multline}
where $r_{\pi|x_0} \in \mathcal{M}(\mathcal{X}^{N+1})$ and each markov transition $r_{\pi|x_0}(x_{t_n}|x_{t_{n-1}}) \in \mathcal{P}(\mathcal{X})$, see \eqref{eq:cond_reciprocal_Markovian}. 

In addition, one can create a family of independent subproblems like \eqref{eq:opt_reciprovcal_cond} indexed by \textit{condition} $x_0$ to recover $r_\pi$ as the family of $r_{\pi|x_0}$. Such an approach is known as \textit{conditional bridge matching} \cite{zhoudenoising,igashovretrobridge}:
\vspace{-3mm}
\begin{multline}
    \label{eq:full_loss}
    r_\pi=\argmin_{s} \mathbb{E}_{\pi(x_1, x_0)}\big[ \sum_{n=1}^{N}\mathbb{E}_{q^{\rm ref}(x_{t_{n-1}}|x_0, x_1)} \\ 
    \KL{q^{\rm ref}(x_{t_n}|x_{t_{n-1}}, x_1)}{s(x_{t_n}|x_{t_{n-1}}, x_0)} - \mathbb{E}_{q^{\rm ref}(x_{t_N}|x_0, x_1)}[\log s(x_1|x_{t_N}, x_0)] \big], 
\end{multline}
% \vspace{-2mm}
where $r_{\pi} \in \mathcal{P}(\mathcal{X}^{N+2})$ at the optimum is the mixture of $r_{\pi|x_0}$ processes \eqref{eq:opt_reciprovcal_cond}. The optimization objective \eqref{eq:full_loss} closely resembles the variational bound used in diffusion models \cite{ho2020denoising,austin2021structured} and is usually solved with standard deep learning techniques \cite{austin2021structured,ksenofontovcategorical}. Common practice is to parametrize a conditional Markov chain $r_\pi$ using a neural network. Specifically, each transition is modeled as $r_\theta(x_{t_{n}}|x_{t_{n-1}}, x_0)$, where $x_{t_{n-1}}$ and $x_0$ serve as inputs and the network outputs a probability distribution over $\mathcal{X}$. That is $r_\theta: \mathcal{X} \times \mathcal{X} \rightarrow \Delta^{|\mathcal{X}|}$, where $\Delta^{|\mathcal{X}|}$ denotes the probability simplex over $\mathcal{X}$.

\vspace{-2mm}
\section{Related Work}
\vspace{-2mm}

We briefly review existing approaches to mutual information estimation, covering non-parametric, variational, and diffusion-based methods, and highlight the limitations of current techniques when applied to discrete state spaces.

\vspace{-2mm}
\paragraph{Non-parametric Estimators.}
Typical non-parametric entropy and mutual information estimators rely on empirical PMF estimates, offering strong baselines for low-dimensional problems with small state spaces~\cite{pinchas2024discrete_comparison}.
However, as task complexity increases, these approaches struggle to yield accurate estimates because many states occur infrequently or remain unobserved.
For example, MI calculated through naïve PMF estimation is always upper-bounded by $\log(\text{sample size})$ \cite{cover2012elements},
a severe limitation in large state spaces.

\vspace{-2mm}
\paragraph{Variational Estimators.}
Neural parametric approaches address this challenge by capturing the intrinsic structure of the data,
filling the missing states implicitly.
Currently, the most prominent class of neural MI estimators capable of handling discrete cases leverages variational bounds.
These include MINE~\cite{belghazi2018mine}, NWJ \cite{nguyen2010estimating_nwj}, InfoNCE~\cite{oord2019infoNCE}, $f$-DIME~\cite{letizia2024mutual_fDIME}, and similar methods.
Such estimators, however, are prone to other theoretical limitations~\cite{mcallester2020limitations_MI} and currently lag behind more elaborate approaches based on diffusion models and normalizing flows~\cite{franzese2024minde,butakov2024normflows}.
On the other hand, these advanced methods target continuous distributions and do not apply to the discrete case --- a gap we fill in our work.

\vspace{-2mm}
\paragraph{Diffusion Based Estimators.} 
A recent class of MI estimators is based on diffusion processes, including MINDE~\cite{franzese2024minde}, which treats MI estimation as a \textit{generative modeling} problem, and InfoBridge~\cite{kholkin2025infobridge}, which approaches it as \textit{domain transfer}. These methods leverage tools from stochastic calculus, such as the disintegration and Girsanov theorems, to express $\KL{\pi(x_0, x_1)}{\pi(x_0)\pi(x_1)}$ as a KL divergence between diffusion processes.

% One of the most recent types of MI estimators is based on diffusion processes. These include MINDE \cite{franzese2024minde} that frames MI estimation as the \textit{generative modeling} problem and InfoBridge \cite{kholkin2025infobridge} that frames MI estimation as \textit{domain transfer} problem . These methods leverage stochastic calculus advances such as the Disintegration theorem and the Girsanov theorem to calculate the $\KL{\pi(x_0, x_1)}{\pi(x_0)\pi(x_1)}$ as the KL between diffusion processes.

The MI estimation is the result of MSE integration over the stochastic process trajectory. Such methods are known to be particularly suitable for complex data and estimation of high mutual information. While a discrete-state variant of MINDE, Info-SEDD~\cite{foresti2025info}, has recently been proposed, diffusion-based estimators have so far been developed primarily for continuous state spaces, limiting their applicability in discrete settings.

\vspace{-2mm}
\section{\ourestname{}. Mutual Information Estimator}
\vspace{-2mm}

In \wasyparagraph\ref{sec:main_theory}, we propose our novel DBMI method to estimate MI between discrete state space random variables, which is based on computing the KL divergence between reciprocal processes. Next, in \wasyparagraph\ref{sec:algorithm} we explain the practical implementation of our proposed MI estimator.

\subsection{Computing the MI though Reciprocal Processes}\label{sec:main_theory}

\vspace{-3mm}
Consider the MI estimation for discrete random variables $X_0, X_1 \in \mathcal{P}(\mathcal{X})$ with corresponding joint distribution $\pi(x_0, x_1)$. To tackle this problem the key idea  we choose the reference process $q^{\rm ref}$ and employ corresponding reciprocal processes: 

\vspace{-3mm}
\begin{gather}
    r_\pi^{\rm joint}(x_0, x_{\rm in}, x_1) \defeq q^{\rm ref}(x_{\rm in}|x_0, x_1)\pi(x_0, x_1), \label{eq:recirpocal_joint} \\
    r_\pi^{\rm ind}(x_0, x_{\rm in}, x_1) \defeq q^{\rm ref}(x_{\rm in}|x_0, x_1)\pi(x_0)\pi(x_1). \label{eq:recirpocal_ind} 
\end{gather}
\vspace{-3mm}

We show that the Mutual Information, as the KL between $\pi(x_0)\pi(x_1)$ and $\pi(x_0, x_1)$ \eqref{eq:mi_def}, is equal to the KL between reciprocal processes $r^{\rm joint}_\pi$ and $r_\pi^{\rm ind}$, i.e., discrete bridge matching models solving domain transfer problem. The latter can be decomposed into the KL between Markov chains.

% We show that the Mutual Information \eqref{eq:mi_def} can be represented as the KL divergence between reciprocal processes $r^{\rm joint}_\pi$ and $r_\pi^{\rm ind}$, i.e., discrete bridge matching models solving domain transfer problem. The latter can be decomposed into the KL divergence between Markov chains.

% \vspace{-10mm}
\begin{algorithm}[!t]
% \hspace{-20mm}
    \caption{\ourestname{}. Learning procedure.}\label{alg:train_model}
    \SetAlgoLined
    \KwIn{Distribution $\pi(x_0, x_1)$ accessible by samples, neural network parametrization $r_\theta$ of conditional Markov transitions $r_\pi^{\rm joint}$ and $r_\pi^{\rm ind}$, batch size $K$, number of inner samples $M$}
    \KwOut{Learned neural network $r_\theta$ approximating processes $r_\pi^{\rm joint}$ and $r_\pi^{\rm ind}$}
    
    \Repeat{converged}{
        Sample batch of pairs $\{x_0^k, x_{1,v=1}^k\}_{k=0}^K \sim \pi(x_0, x_1)$\;
        Make a shuffle $\{x_{1,v=0}^k\}_{k=0}^K = \text{Permute}(\{x_{1,v=1}^k\}_{k=0}^K)$; 
        \tcp{$v\!\!=\!\!0$ marks $r_\pi^{\rm ind}$, $v\!\!=\!\!1$ marks $r_\pi^{\rm joint}$} 
        Sample batch $\{n_k\}_{k=0}^K \sim U[1, N]$\; 
        Sample batch $\{x_{t_{n_k},v}^m\}_{m=0}^M \sim q^{\rm ref}(x_{t_{n_k}}|x_0^k, x_{1,v}^k)$ \;
        $\mathcal{L}_\theta = \frac{1}{KM} \sum\limits_{v\in\{0, 1\}} \sum\limits_{k=1}^{K}\sum\limits_{m=1}^{M}  \bigg[\mathbb{I}_{[n_k \neq N]}\cdot 
        \KL{q^{\rm ref}(x_{t_{n_k + 1}}|x^m_{t_{n_k},v}, x_{1,v}^k)}{r_{\theta}(x_{t_{n_k + 1}}|x^m_{t_{n_k},v}, x_0^k, v)} 
        -\mathbb{I}_{[n_k = N]}\cdot\log r_{\theta}(x_1|x_{t_{N},v}^m, x_0^k, v) \bigg]$ \;
        Update $\theta$ using $\frac{\partial \mathcal{L}_\theta}{\partial \theta}$
        }
\end{algorithm}

\begin{algorithm}[t!]
    \caption{\ourestname{}. Estimation procedure.}\label{alg:MI_estimator}
    \SetAlgoLined
    \KwIn{Distribution $\pi(x_0, x_1)$ accessible by samples, neural network parametrization $r_\theta$ of conditional Markov transitions $r_\pi^{\rm joint}$ and $r_\pi^{\rm ind}$, number of samples $K$,  number of inner samples $M$}
    \KwOut{Mutual information estimation $\widehat{\text{MI}}$}
    
    Sample batch of pairs $\{x_0^k, x_1^k\}_{k=1}^K \sim \pi(x_0, x_1)$; \\
    
    Sample batch $\{n_k\}_{k=0}^K \sim U[1, N]$;\\
    
    Sample $\{x_{t_{n_k}}^m\}_{m=0}^M\sim q^{\rm ref}(x_{t_{n_k}}|x_0^k, x_1^k)$; \\
    
    $\widehat{\text{MI}} \leftarrow \frac{1}{KM}\sum\limits_{k=1}^{K}\sum\limits_{m=1}^{M}\bigg[ \KL{r_{\theta}(x_{t_{n_k+1}}|x_{t_{n_k}}^m, x_0^k, 1)}{r_{\theta}(x_{t_{n_k+1}}|x_{t_{n_k}}^m, x_0^k, 0)}\bigg]$;\\
    
    \Return $\widehat{\text{MI}}$
\end{algorithm}

% \alex{proposition, not theorem}

\begin{proposition}[Mutual Information Decomposition]\label{th:mi_main}
    Consider random variables $X_0, X_1 \in \mathcal{P}(\mathcal{X})$ and their joint distribution $\pi(x_0, x_1)$. Consider reciprocal processes $r_\pi^{\rm joint}$ and $r_\pi^{\rm ind}$ induced by distributions $\pi(x_0, x_1)$ and $\pi(x_1)\pi(x_0)$ respectively, as in \eqref{eq:recirpocal_joint} and \eqref{eq:recirpocal_ind}. The the MI between the random variables $X_0$ and $X_1$ can be expressed as:
    \vspace{-3mm}
    \begin{gather}
        I(X_0;X_1) = \sum_{n=1}^{N}\mathbb{E}_{r_\pi(x_0, x_{t_n})} \Big[ \KL{r_\pi^{\rm joint}(x_{t_{n+1}}|x_{t_n}, x_0)}{r_\pi^{\rm ind} (x_{t_{n+1}}|x_{t_n}, x_0)}\Big], \label{eq:mutinfo_th}
    \end{gather}
    where
    \vspace{-6mm}
    \begin{gather}
        r_\pi^{\rm joint}(x_0, x_{\rm in}, x_1) = \pi(x_0) \prod_{n=1}^{N}r_\pi^{\rm joint}(x_{t_{n+1}}|x_{t_{n}}, x_0), \\[-1mm]
        r_\pi^{\rm ind}(x_0, x_{\rm in}, x_1) = \pi(x_0) \prod_{n=1}^{N}r_\pi^{\rm ind}(x_{t_{n+1}}|x_{t_{n}}, x_0). 
    \end{gather}
    % \vspace{-4mm}
    % \alex{DENSITY? Sum over what?}
The $r_\pi^{\rm joint}$ and $r_\pi^{\rm ind}$ are the conditional Markov transitions from the corresponding representations \eqref{eq:full_reciprocal_Markovian} of the reciprocal processes $r_\pi^{\rm joint}$ and  $r_\pi^{\rm ind}$.
\end{proposition}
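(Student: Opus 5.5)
Our plan is to express $\MI(X_0;X_1)$ as a KL divergence between path measures, and then expand that KL along the Markov chain representation \eqref{eq:full_reciprocal_Markovian} using the chain rule.

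\emph{Step 1.} We show that
\[
\KL{r_\pi^{\rm joint}}{r_\pi^{\rm ind}} = \KL{\pi(x_0,x_1)}{\pi(x_0)\pi(x_1)} = \MI(X_0;X_1).
\]
By \eqref{eq:recirpocal_joint} and \eqref{eq:recirpocal_ind}, both processes share the same bridge $q^{\rm ref}(x_{\rm in}|x_0,x_1)$. Hence on the support the ratio of their PMFs equals $\pi(x_0,x_1)/(\pi(x_0)\pi(x_1))$, and the inner variables $x_{\rm in}$ integrate out. Absolute continuity holds because $\pi(x_0,x_1)>0$ implies $\pi(x_0)\pi(x_1)>0$. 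This is the discrete analogue of the disintegration theorem, and it is a one-line computation.

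\emph{Step 2.} We rewrite both processes in the conditioned Markov form \eqref{eq:full_reciprocal_Markovian}, justified by the Markov property \eqref{eq:cond_reciprocal_Markovian} of the conditioned reciprocal process from the earlier proposition. Both processes have the same $x_0$-marginal $\pi(x_0)$, since the product of marginals $\pi(x_0)\pi(x_1)$ keeps the $x_0$-marginal unchanged. The log-ratio of the path PMFs is therefore a telescoping sum of log-ratios of the transitions $r^{\rm joint}_\pi(x_{t_{n+1}}|x_{t_n},x_0)$ and $r^{\rm ind}_\pi(x_{t_{n+1}}|x_{t_n},x_0)$. We take the expectation under $r_\pi^{\rm joint}$. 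For each term we condition on $(x_0,x_{t_n})$, and by the Markov property only this pair matters for the next transition. The inner expectation over $x_{t_{n+1}}$ is then exactly the KL between the two transition kernels, and the outer expectation is over the joint marginal $r_\pi^{\rm joint}(x_0,x_{t_n})$, which is the $r_\pi$ in \eqref{eq:mutinfo_th}. Summing over $n$ gives the claim.

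\emph{Main obstacle.} The main difficulty is bookkeeping, not analysis. First, we must reconcile the indices: the product in \eqref{eq:full_reciprocal_Markovian} should run over the transitions $t_0\to t_1,\dots,t_N\to t_{N+1}$. We will include the first transition from $x_0$ (the $n=0$ term, where $x_{t_0}=x_0$) or treat it as part of the indexing convention, so that every transition is counted exactly once. Second, we must handle zero-probability states. We expect to argue that $r_\pi^{\rm joint}\ll r_\pi^{\rm ind}$ as path measures, which follows from Step 1. Consequently each conditional KL is finite wherever $r_\pi^{\rm joint}(x_0,x_{t_n})>0$. We define conditionals arbitrarily elsewhere; this does not affect the expectation.
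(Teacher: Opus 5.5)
Your proposal is correct and follows essentially the paper's route. The shared bridge $q^{\rm ref}(x_{\rm in}|x_0,x_1)$ gives $\KL{r_\pi^{\rm joint}}{r_\pi^{\rm ind}}=\MI(X_0;X_1)$, and the common $x_0$-marginal together with the conditional Markov factorization turns this path-space KL into expected KLs between transition kernels.

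Your indexing worry is genuine, and the paper's proof passes over it silently. The first-step term $\mathbb{E}_{\pi(x_0)}\KL{r_\pi^{\rm joint}(x_{t_1}|x_0)}{r_\pi^{\rm ind}(x_{t_1}|x_0)}$ is nonzero in general; this already happens for $S=2$, $N=1$, $X_1=X_0$ uniform and $\alpha\neq 1/2$. So you must take your first option and sum over all $N+1$ transitions $t_n\to t_{n+1}$, $n=0,\dots,N$. Do not appeal to an indexing convention: a sum that keeps only the $N$ terms $n=1,\dots,N$ as literally stated is not equal to $\MI(X_0;X_1)$.
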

\vspace{-2mm}

% \vspace{-2mm}
See Appendix~\ref{appx:proof_main_theorem} for proof. Once the Markov chain transitions $r_\pi^{\rm joint}$ and $r_\pi^{\rm ind}$ are known, our Proposition~\ref{th:mi_main} provides a straightforward way to estimate the mutual information between random variables $X_0$ and $X_1$ by evaluating the KL divergence between $r_\pi^{\rm joint}$ and $r_\pi^{\rm ind}$ at trajectory points $x_{t_n}, x_0 \sim r_{\pi}^{\rm joint}(x_0, x_{t_n})$. 

\vspace{-2mm}
\subsection{Practical Algorithm}\label{sec:algorithm}
\vspace{-2mm}

% \alex{somehow split training and inference(estimation)}

% \alex{add some details to apprndix, e.g., the approximation by posterior sampling, Lsimple}
% \sergei{Add posterior to appendix B.1. mention in main text.}

We developed a practical algorithm to estimate the mutual information based on Proposition~\ref{th:mi_main} and call it \textbf{\ourestname{}}. Our approach consists of two distinct stages: first, we learn the Markov transitions via bridge matching, (see \wasyparagraph\ref{sec:cat_BM} and Algorithm~\ref{alg:train_model}); second, we use these learned transitions to compute the mutual information (see Algorithm~\ref{alg:MI_estimator}).

The reciprocal processes $r_\pi^{\rm joint}$ and $r_\pi^{\rm ind}$ can be recovered by the Bridge Matching for discrete state spaces procedure, see \wasyparagraph\ref{sec:cat_BM}. We have to solve optimization problem~\eqref{eq:full_loss} by parametrizing $r_\pi^{\rm joint}$ and $r_\pi^{\rm ind}$ with neural networks $r_\pi^{\rm joint, \phi}$ and $r_\pi^{\rm ind, \psi}$, respectively, and applying Gradient Descent on Monte Carlo approximation of~\eqref{eq:full_loss}.

The MI estimate in~\eqref{eq:mutinfo_th} requires sampling from the reciprocal process $r_\pi^{\rm joint}(x_0, x_{t_n})$ at times $0$ and $t_n$, which is straightforward in a \textit{simulation-free} manner since:
% \vspace{-2mm}
\begin{gather}
    r_\pi^{\rm joint}(x_0, x_{t_n}) = \mathbb{E}_{\pi(x_1)}[r_\pi^{\rm joint}( x_0, x_{t_n}|x_1)] = \mathbb{E}_{\pi(x_1)}[q^{\rm ref}(x_{t_n}|x_0, x_1)\pi(x_0|x_1)].
\end{gather}
\vspace{-3mm}

Therefore, a sample from it can be obtained by sampling from \( \pi(x_0, x_1) \) and  \( q^{\rm ref}(x_{t_n}|x_0, x_1) \).

% We call our practical MI estimation algorithm \ourestname{}. The reciprocal process $r_\theta$ learning procedure is described in Algorithm~\ref{alg:train_model} and the MI estimation procedure is described in Algorithm~\ref{alg:MI_estimator}.

\vspace{-2mm}
\paragraph{Amortization of Learning Procedure.}

In practice, to ammortize the optimization procedure, we replace two separate neural networks that approximate the conditional
Markov transitions $r_\pi^{\rm joint}(x_{t_{n+1}}|x_{t_n}, x_0)$ and $r_\pi^{\rm ind}(x_{t_{n+1}}|x_{t_n}, x_0)$ with a single neural network that incorporates an additional binary input. Specifically, we introduce a binary input $v \in \{0, 1\}$ to unify the Markov chain transitions approximations in the following way:  
$r_\theta(\cdot, 1) \approx r_\pi^{\rm joint}(\cdot)$  
and  
$r_\theta(\cdot, 0) \approx r_\pi^{\rm ind}(\cdot)$.  
The introduction of such additional input is a common technique (see, for e.g., \cite{franzese2024minde}).

\vspace{-2mm}
\paragraph{Markov Transitions Parametrization.}

% \alex{repharase GPT}
Note that the size of the state space grows exponentially as $|\mathcal{X}| = |\mathbb{S}|^D = S^D$, where $S$ is the number of categories per variable and $D$ is the total number of variables. Consequently, a neural network used to parametrize the transition kernel $r_\theta(x_{t_n}|x_{t_{n-1}})$ would, in principle, need to output $S^D$ probabilities, which quickly becomes computationally prohibitive. Because directly modeling such transition distributions is impractical, we adopt a common strategy from discrete generative modeling~\cite{austin2021structured,sahoo2024simple} and impose a factorized structure over dimensions on the probabilities:
\vspace{-2mm}
\begin{equation}
    r_\theta(x_{t_n}|x_{t_{n-1}}, x_0) \approx \prod_{d=1}^D r_\theta(x_{t_n}^d|x_{t_{n-1}}, x_0).
\end{equation}
% \vspace{-2mm}
In this case, for each $x_{t_{n-1}}$, we predict a row-stochastic $D \times S$ matrix of probabilities $r_\theta(x_{t_n}^d \mid x_{t_{n-1}}, x_0)$. Rather than using separate models for each of the $N+1$ transitions, we train a single neural network with an additional timestep input. The network takes inputs of the form $S^D \times S^D \times [0,1]$ and outputs $\mathbb{R}^{S \times D}$. We also apply posterior sampling (see Appendix~\ref{appx:method_details} for details).

% In this case, for each $x_{t_{n-1}}$, we just need to predict a row-stochastic matrix $D \times S$ matrix of probabilities $r_\theta(x_{t_n}^d|x_{t_{n-1}}, x_0)$. In addition, we have to learn $N+1$ distinct probability transitions $r_\theta(x_{t_n}^d|x_{t_{n-1}}, x_0)$ and instead of using the distinct neural networks, we learn a single neural network with an additional input indicating the timestep. Thus, the neural network takes input in the form $S^D \times S^D \times [0, 1]$ and outputs $\mathbb{R}^{S \times D}$. In addition, we employ posterior sampling (see Appendix~\ref{appx:method_details} for additional details).

\vspace{-2mm}
\paragraph{Reference Process.} Reciprocal processes that we work with depend on the reference process $q^{\rm ref}$, see \wasyparagraph\ref{sec:reciprocal_processes}. We consider the case where we do not have any prior information on data so we choose the simplest reference process with \textit{uniform} categorical noise, i.e., random change of category, \cite{austin2021structured}. We start from one-dimensional case. Define a process where the state stays in the current category $x_{t_{n-1}}$ with high probability, while the remaining probability is distributed uniformly among all other categories. We call this process the \textit{uniform}. Formally, as a homogeneous Markov process, it is defined by the following probabilities:

\vspace{-4mm}
\begin{gather}
    q^{\rm unif}(x_{t_{n}}|x_{t_{n-1}})= \begin{cases}
        1 - \alpha, & x_{t_n} = x_{t_{n-1}}, \\
        \dfrac{\alpha}{S-1}, & x_{t_n} \neq x_{t_{n-1}},\label{eq:q_ref}
    \end{cases}
\end{gather}
% \vspace{-2mm}
where $\alpha \in [0, 1]$ is the \textit{stochasticity parameter} that controls the probability of transitioning into another state. Then, to construct such a process for $D > 1$  one has to combine several such independent one-dimensional processes. 

\vspace{-1mm}
For such case bridge $q^{\rm ref}(x_{t_n} |x_0, x_1)$ and posterior $q^{\rm ref}(x_{t_n}|x_{t_{n-1}}, x_1)$ is known analytically \cite{austin2021structured} and can be sampled very easily for all $t_n$. While alternative reference processes, such as those incorporating intercategory  dependencies~\cite{austin2021structured}, are possible, we adopt $q^{\rm unif}$ due to its simplicity and robustness, as it does not rely on any additional assumptions. 

\begin{table*}[!t]
    \vspace{-5mm}
    \centering
    \begin{tabular}{lccccccc}
        \toprule
        Dim & MI$_{true}$ & \ourestname{} (\textbf{ours}) & $f$-DIME-KL & $f$-DIME-H & $f$-DIME-G & MINE & InfoNCE \\
        \midrule
        $D=2$ & 3.38 &  \textbf{3.40} & 2.97 & 3.10 & 3.17 & 3.23 &  3.31  \\
        $D=5$ & 8.50  &   \textbf{8.67}      & 3.43 & 6.89 & 7.08 & 6.69 &   5.99   \\
        $D=10$ & 16.80 & \textbf{16.90}       & 5.04 & 7.01 & 9.23 & 3.28 &  6.22  \\
        $D=25$ & 42.36  &  \textbf{42.10}       & 4.35 & 4.69 & 9.33 & 9.83 &  6.22  \\
        $D=50$ & 85.47 & \textbf{79.30}        & 3.82 & 1.85 & 9.52 & 8.75 &  6.19   \\
        \bottomrule
    \end{tabular}
    \caption{Results on the low-dimensional benchmark for varying dimension $D$ (see Dim column) and fixed number of categories $S=10$. MI$_{\text{true}}$ denotes ground-truth MI; other columns show estimates from each method. The closer to MI$_{\text{true}}$ the better. The closest to GT estimates are marked in \textbf{bold}.}
    \label{tab:benchmark_cat10}
\end{table*}

\begin{table*}[!t]
    % \tiny
    \centering
    \begin{tabular}{lccccccc}
        \toprule
        NumCat  & MI$_{true}$ & \ourestname{} (\textbf{ours}) & $f$-DIME-KL & $f$-DIME-H & $f$-DIME-G & MINE & InfoNCE \\
        \midrule
        $S=2$ & 3.64 &    3.91     &  3.08  & 3.37 & \textbf{3.84} & 3.28 &   2.909   \\
        $S=5$ & 10.35 &    \textbf{10.21}     &   4.65  & 7.44 & 6.97 & 6.82 &  6.13  \\
        $S=10$ & 16.80 &   \textbf{16.90}  & 5.04 & 7.01 & 9.23 & 3.28 &   6.21  \\
        $S=25$ & 25.86 &  \textbf{25.33}        &  4.42  & -9.65 & 9.97 & 9.8 &   6.21  \\
        $S=50$ & 32.73 &  \textbf{31.03}    &  3.57  & 7.11 & 8.65 & 9.71 &   6.18  \\
        \bottomrule
    \end{tabular}
    \caption{Results on the low-dimensional benchmark for varying number of categories $S$ (see Dim column) and fixed dimension $D=10$. MI$_{\text{true}}$ denotes ground-truth MI; other columns show estimates from each method.
    The closer to MI$_{\text{true}}$ the better. The closest to GT estimates are marked in \textbf{bold}.}
    \label{tab:benchmark_dim10}
    \vspace{-2mm}
\end{table*}

\vspace{-3mm}
\section{Experiments}\label{sec:experiments}
\vspace{-2mm}

We evaluate our method on two benchmarks, where known ground truth MI values allow us to assess the performance of our method.
To cover low-dimensional cases, we construct joint probability distributions with a manageable number of categories and then stack these distributions upon each other (see \wasyparagraph\ref{sec:low_dim_bench}). Next, to evaluate the capability of our method to estimate mutual information between complex, high-dimensional random variables, we propose a procedure for embedding discrete state space variables into structured image data and estimating MI between them (see \wasyparagraph\ref{sec:image_bench}). We compare ours \ourestname{} to other popular neural MI estimators that are suitable for working with discrete random variables: MINE \cite{belghazi2018mine}, InfoNCE \cite{oord2019infoNCE}, NWJ \cite{nguyen2010estimating_nwj} and $f$-DIME \cite{letizia2024mutual_fDIME} with KL, Hellinger (H), and GAN (G) divergences, using the same deranged architecture across all divergences.

\vspace{-2mm}
\subsection{Low Dimensional Benchmark}\label{sec:low_dim_bench}
\vspace{-2mm}

% \textcolor{red}{[Irina]: \ourestname{}}, \textcolor{red}{[Ivan]: info about another estimators.}

We validation our method on distributions with known mutual information, constructed in the following way. Consider a random vector \((X_0,X_1) = (X^{1}_0, X^{1}_1, \ldots, X^{D}_0, X^{D}_1)\) taking values in \(\mathbb{S}^{2D}\). We factorize this random vector pdf $\pi(x_0, x_1)$ across dimensions:
\vspace{-2mm}
\begin{equation*}
    \pi(x_0 ,x_1) = \prod_{d=1}^{D} \pi_d(x^{d}_0, x^{d}_1),
\end{equation*}
where each bivariate factor \(\pi_d(x^{d}_0, x^{d}_1)\) is constructed by first drawing \(x^{d}\) uniformly over \(\mathbb{S}\) and then defining the conditional distribution via a randomly generated stochastic transition matrix \(\Pi(x^{d}_1|x^{d}_0)\) with controlled level of dependence between $X^d_0$ and $X^d_1$. 
Owing to the factorization across dimensions, the total mutual information decomposes as $I(X_0;X_1) = \sum_{d=1}^{D} I(X^{d}_0; X_1^{d})$.

\begin{figure*}[t]

    \vspace{-6mm}
    \centering

    \begin{subfigure}[t]{0.24\linewidth}
        \centering
        \includegraphics[width=\linewidth]{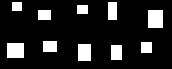}
        \caption{$x_0\sim\pi(x_0)$}
    \end{subfigure}
    \begin{subfigure}[t]{0.24\linewidth}
        \centering
        \includegraphics[width=\linewidth]{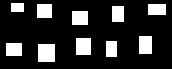}
        \caption{$x_1\sim\pi(x_1)$}
    \end{subfigure}
    \begin{subfigure}[t]{0.24\linewidth}
        \centering
        \includegraphics[width=\linewidth]{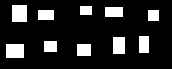}
        \caption{$x_1\sim r_{\theta}(\cdot, 0)$}
    \end{subfigure}
    \begin{subfigure}[t]{0.24\linewidth}
        \centering
        \includegraphics[width=\linewidth]{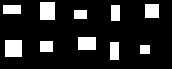}
        \caption{$x_1\sim r_{\theta}(\cdot, 1)$}
    \end{subfigure}

    \caption{Qualitative samples, presented in $5\times2$ image grids, generated using image benchmark and \ourestname{}, i.e.,  $r_{\theta}(\cdot)$, at resolution $32\times32$. }
    \label{fig:rectangles32}
    % \vspace{-4mm}
\end{figure*}
\begin{table*}[t]
    \vspace{-2mm}
    \centering
    \begin{tabular}{lcccccc}
        \toprule
        MI$_{true}$  & \ourestname{} (\textbf{ours}) & $f$-DIME-KL & $f$-DIME-H & $f$-DIME-G & MINE & NWJ \\
        \midrule
       $1$ & 0.504 & 0.668 & 0.687 & \textbf{0.701} & 0.0 & 0.0 \\
        $2$ & \textbf{1.426} & 1.158 & 1.264 & 1.330 & 0.0 & 0.0 \\
        $4$ & \textbf{3.923} & 2.211 & 2.676 & 2.719 & 1.84 & 0.685 \\
        $6$ & \textbf{6.301} & 2.957 & 3.678 & 3.385 & 2.784 & 1.789 \\
        $8$ & \textbf{8.447} & 4.239 & 4.919 & 4.886 & 3.776 & 2.554 \\
        \bottomrule
    \end{tabular}
    \caption{Results for the high-dimensional image benchmark  with $32\times32$ images across all estimators. MI$_{\text{true}}$ denotes the ground-truth mutual information. All the other columns correspond to MI estimation results for a particular method. The closer to MI$_{\text{true}}$ the better. The closest to GT estimates are highlighted in \textbf{bold}.}
    \label{tab:exp_rectangle}
    \vspace{-5mm}
\end{table*}
%The method is validated on synthetic distributions with known mutual information. We generate pairs of categorical random vectors \( X = (x^{1}, x^{2}, \ldots, x^{D}) \) and \( Y = (y^{1}, y^{2}, \ldots, y^{D}) \), where each dimension \( d \) is sampled independently. For each dimension, we sample \( x^{d} \) uniformly over $\mathbb{S}$ and construct a row-stochastic transition matrix \( P(y^{d}|x^{d}) \) with tunable diagonal strength via a stochasticity parameter \( \sigma \): smaller \( \sigma \) yields sharper diagonals (higher MI), larger \( \sigma \) yields flatter matrices (lower MI). The total mutual information then it is calculated as follows: \( I(X;Y) = \sum_{d=1}^D I(x^{d}; y^{d}) \). Other details are described in Appendix~\ref{appx:synthetic}.

We compare our \ourestname{} and  other mentioned MI estimators on low-dimensional benchmark, by increasing both $S=|\mathbb{S}|$ and \( D \) as shown in Table~\ref{tab:benchmark_cat10} and Table~\ref{tab:benchmark_dim10}. All the methods are trained using \( 10^4 \) train samples and \( 10^4 \) test samples. Our method employs a Diffusion Transformer (DiT)-based architecture trained for 150 epochs, $\alpha=10^{-4}$ for $q^{\rm ref}$ \eqref{eq:q_ref}, $M=1$ for training, Algoritm~\ref{alg:train_model}, and $M=10$ for MI estimation, Algoritm~\ref{alg:train_model}. For further details, see Appendix~\ref{appx:synthetic}.

%\textcolor{red}{That is not true!}
%\textit{We train all methods using \( 10^5 \) samples and a batch size of \( 512 \) for \( 150 \) epoch.}

The results demonstrate that \ourestname{} consistently outperforms competing methods. \ourestname{} holds precise MI estimation with the growth of dimension, see Table~\ref{tab:benchmark_cat10},  and the growth of number of categories, see Table~\ref{tab:benchmark_dim10}, while other neural estimators start to  fail very early.
% \alex{unfinished tables}
% \alex{why only neural?}

\vspace{-2mm}
\subsection{Image Benchmark}\label{sec:image_bench}
\vspace{-2mm}

To evaluate scalability, we propose the first image-based benchmark for discrete data. Building on the idea that MI is invariant w.r.t. bijective mappings \cite{butakov2024lossy_compression}. We construct low-dimensional latent random variables and corresponding bijective mapping to the \textit{image} space. This allows for precise MI control within a high-dimensional, complex image state space.

Specifically, we have four \emph{latent variables} $(X^1_0, X^2_0, X^3_0, X^4_0)$ that specify the pixel border coordinates of the single rectangle. We sample each coordinate as $X_0^i \sim U(0,\dots,V)$ and enforce a minimum side length $V^{\text{min}}$ to avoid degenerate rectangles. The target $X_1$ is obtained by passing each latent coordinate $X_0^i$ through a symmetric noisy channel \cite[\wasyparagraph4.5]{leebenchmark} independently, which yields following mutual information between images $\MI(X_0; X_1) = \sum_{i=1}^{4} \MI(X^i_0; X_1^i)$.

\begin{wrapfigure}{r}{0.6\textwidth}
\vspace{-5mm}
% \vspace{}
    \centering
    \includegraphics[width=0.95\linewidth]{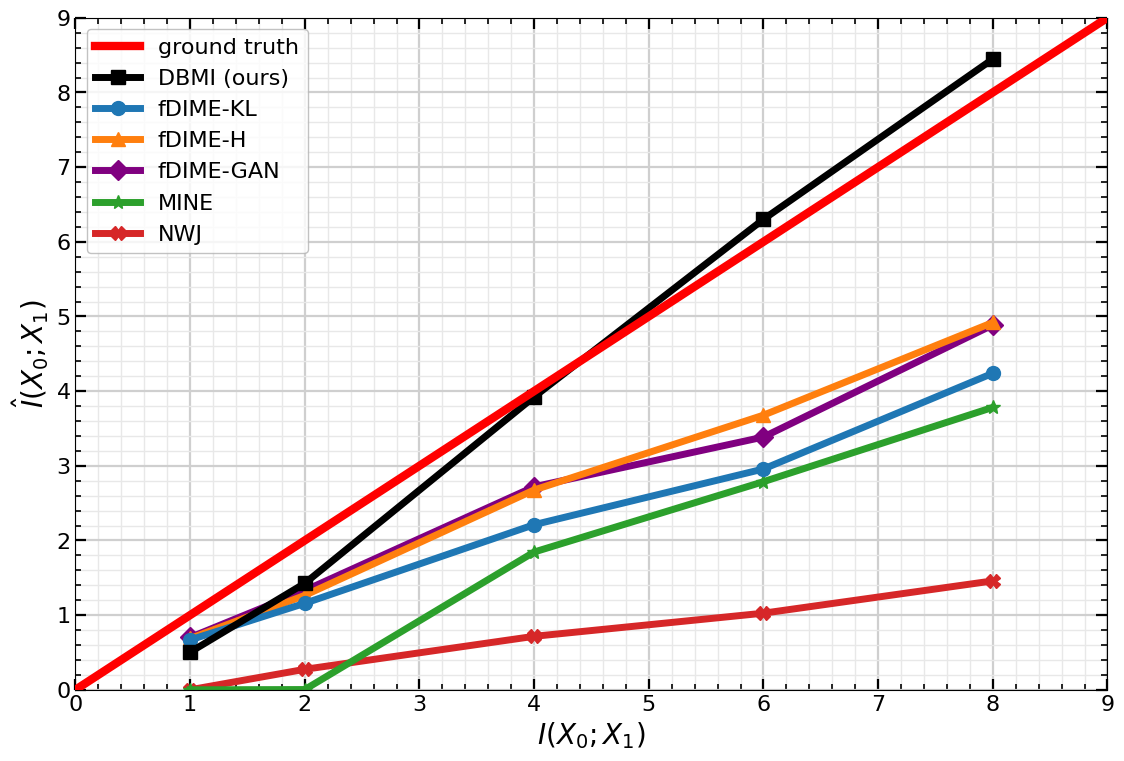}
    \caption{Comparison of estimated mutual information $\hat{\MI}(X_0; X_1)$ across methods against the ground-truth $\MI(X_0; X_1)$ (\textcolor{red}{red}) on a high-dimensional image benchmark with size $32\times32$.}
    % Shaded regions denote 99\% confidence intervals computed across multiple random-seed runs.}
    \label{fig:exp_rectangle}
    \vspace{-5mm}
\end{wrapfigure}

In our experiments, we set $V=10$, $V^{\text{min}} = 10$ and render $32 \times 32$ binary images ($S = 2$), where $1$ indicates the rectangle and $0$ the background. We train all methods on $10^5$ samples for mutual information values in $\{1,2,4,6,8\}$ and evaluate on $10^4$ validation samples. Specifically, we train our method for 20 epochs and use $\alpha=10^{-2}$ for $q^{\rm ref}$ in \eqref{eq:q_ref}, $M=1$ for training, Algoritm~\ref{alg:train_model}, and $M=10$ for MI estimation, Algoritm~\ref{alg:train_model}. Additional experiments with $16 \times 16$ images are presented in Appendix~\ref{appx:add_exp}.

Results in Figure~\ref{fig:exp_rectangle} and Table~\ref{tab:exp_rectangle} show that \ourestname{} scales to high-dimensional images and accurately estimates mutual information in regimes where competing methods fail. In addition to accurate MI estimation, our method learns to generate perfect samples from marginal $x_1$, see Figure~\ref{fig:rectangles32}.

\vspace{-2mm}
\section{Discussion}
\vspace{-2mm}

% \alex{Appendices: technical details, 1 proof, images from GK}
\paragraph{Potential Impact.} Our contributions include the development of a novel Mutual Information estimator for the discrete state space random variables grounded in the reciprocal processes and the bridge matching theory. The proposed algorithm, \ourestname{}, achieves superior performance over widely used mutual information estimators applicable for discrete state spaces, including MINE \cite{belghazi2018mine}, InfoNCE \cite{oord2019infoNCE}, and $f$-DIME \cite{letizia2024mutual_fDIME}, across both low-dimensional benchmark and challenging high-dimensional image-based benchmark.

\vspace{-2mm}
\paragraph{Limitations.} The factorization of transition probabilities described in \wasyparagraph\ref{sec:algorithm} introduces some bias towards the MI estimation. However, generally such bias is considered negligible, see \cite[\wasyparagraph4.5]{campbell2022continuous}, and reduces to zero with $N \rightarrow \infty$. Moreover, our training and estimation are simulation-free, so we can choose $N$ arbitrarily large.

\vspace{-2mm}
\paragraph{Future Work.} 
% This exploration into the Mutual Information estimation between finite state space random variables opens new perspective in bridge matching based estimators information theory.
This exploration of mutual information estimation for discrete state space variables opens new directions in information theory, particularly for \textbf{mixed state spaces}—combinations of discrete and continuous variables. Notably, bridge matching has already shown promise in mixed-type data generation, such as tabular data~\cite{guzmanexponential_tabbyflow}.

% This exploration into the Mutual Information estimation between discrete state space random variables opens a new perspective in information theory, including MI estimation, on \textbf{mixed state spaces}, i.e., the combination of discrete state space random variables and continuous state space random variables. Moreover, the bridge matching methodology already showed itself promising in mixed state space data generation, e.g., tabular data generation \cite{guzmanexponential_tabbyflow}.

% \paragraph{Reproducibility.} All the technical details that are required to reproduce the work are stated either in the main experimental section \wasyparagraph\ref{sec:experiments} or Appendix~\ref{appx:exp}. All the proofs for the provided theoretical results are provided either in Appendix~\ref{appx:proofs}.

\bibliography{references}
\bibliographystyle{iclr2025_delta}

\newpage
\appendix
% \section{Appendix}

\section{Proofs}\label{appx:proofs}

\subsection{Conditional Reciprocal Process as Markov Chain}

\begin{proof}
    Consider conditional reciprocal process $r_\pi$:
    \begin{gather}
        r_\pi(x_{\rm in}, x_1|x_0) = r_\pi(x_{t_1:t_{N + 1}}|x_0)
    \end{gather}
    Let us consider part of the reciprocal process $r(x_{t_1:t_{k}}|x_0)$, marginalize it by future timesteps $t_{k+1}:t_{N+1}$ and apply the reciprocal process definition \eqref{eq:reciprocal_def}:
    \begin{gather}
    r(x_{t_1:t_{k}}|x_0)= \sum_{x_{t_{k+1}}:x_{t_{N+1}}} r(x_{t_1:t_{N+1}}|x_0) \nonumber =  \sum_{x_{t_{k+1}}:x_{t_{N+1}}} q^{\rm ref}(x_{t_1:t_{N}}|x_0) \pi(x_1|x_0) = \nonumber \\ \sum_{x_{t_{k+1}}:x_{t_{N+1}}} q^{\rm ref}(x_{t_1:t_{N}}|x_0)\frac{q^{\rm ref}(x_1|x_0)}{q^{\rm ref}(x_1|x_0)} \pi(x_1|x_0)  = \sum_{x_{t_{k+1}}:x_{t_{N+1}}} q^{\rm ref}(x_{t_1:t_{N+1}}|x_0) \frac{\pi(x_1|x_0)}{q^{\rm ref}(x_1|x_0)}\label{eq:proof_reciprocal_1}
    \end{gather}

    Then recap that reference process $q^{\rm ref}$ is Markov:

    \begin{gather}
        q^{\rm ref}(x_{t_1:t_{N+1}}|x_0) = \left[\prod_{i=1}^kq^{\rm ref}(x_{t_i}|x_{t_{i-1}}, x_0)\right] \cdot  q^{\rm ref}(x_{t_{k+1}:t_{N+1}}|x_k, x_0) \label{eq:proof_Markov_q}
    \end{gather}

    Apply Markov chain form \eqref{eq:proof_Markov_q} for reciprocal process \eqref{eq:proof_Markov_q}:

    \begin{gather}
        r(x_{t_1:t_{k}}|x_0)= \left[\prod_{i=1}^kq^{\rm ref}(x_{t_i}|x_{t_{i-1}}, x_0)\right] \cdot \sum_{x_{t_{k+1}}:x_{t_{N+1}}} \frac{\pi(x_1|x_0)}{q^{\rm ref}(x_1|x_0)}q^{\rm ref}(x_{t_{k+1}:t_{N+1}}|x_k, x_0) \label{eq:proof_reciprocal_h}
    \end{gather}
    Then it is easy to see that last line term depends only on $x_t$:

    \begin{gather}
    h(x_k) = \sum_{x_{t_{k+1}}:x_{t_{N+1}}} \frac{\pi(x_1|x_0)}{q^{\rm ref}(x_1|x_0)}q^{\rm ref}(x_{t_{k+1}:t_{N+1}}|x_k, x_0)
    \end{gather}

    Finally let us take a look at the ratio between the parts of reciprocal process $r_\pi$ and apply \eqref{eq:proof_reciprocal_h}:

    \begin{gather}
        r_\pi(x_{t_k}|x_{t_1:t_{k-1}}, x_0) = \frac{r_\pi(x_{t_1:t_{k}}|x_0)}{r_\pi(x_{t_1:t_{k-1}}|x_0)} =  \frac{\left[\prod_{i=1}^kq^{\rm ref}(x_{t_i}|x_{t_{i-1}}, x_0)\right]}{\left[\prod_{i=1}^{k-1}q^{\rm ref}(x_{t_i}|x_{t_{i-1}}, x_0)\right]} \frac{h(x_k)}{h(x_{k-1})} = \nonumber \\ q^{\rm ref}(x_{t_k}|x_{t_{k-1}}, x_0) \frac{h(x_k)}{h(x_{k-1})} = r_\pi(x_{t_k}|x_{t_{k-1}}, x_0)
    \end{gather}

    Since $r_\pi(x_{t_k}|x_{t_1:t_{k-1}}, x_0) = r_\pi(x_{t_k}|x_{t_{k-1}}, x_0)$ it is evident that $r_\pi(x_{\rm in}, x_1|x_0)$ is a Markov chain.
\end{proof}

\subsection{Mutual Information Decomposition proof.} \label{appx:proof_main_theorem}

\begin{proof} Using the chain rule for KL divergence between stochastic processes $r_\pi^{\rm joint}$ and $r_\pi^{\rm ind}$:
\begin{gather}
    \KL{r_\pi^{\rm joint}(\cdot)}{r_\pi^{\rm ind}(\cdot)} = \KL{\pi(x_0)}{\pi(x_0)} +  \mathbb{E}_{\pi(x_0)}\left[\KL{r^{\rm joint}_{\pi|x_0}(\cdot)}{r_{\pi|x_0}^{\rm ind}(\cdot)} \right] \label{eq:proof_chain_rule_1}
\end{gather}
% \alex{x0 from what distribution?}
Note that  $r_\pi^{\rm joint}$ and $r_\pi^{\rm ind}$ share the same $\pi(x_0)$ marginal by construction, see \eqref{eq:recirpocal_ind} and \eqref{eq:recirpocal_joint}, so the first KL term is zero. Applying the chain rule for KL divergence between stochastic processes once more, and decomposing at the initial and terminal times of the reciprocal process, we obtain:
% \alex{DEEPSEEK GRAMMAR FIX}
\begin{multline}
    \KL{r_\pi^{\rm joint}(\cdot)}{r_\pi^{\rm ind}(\cdot)} = \KL{\pi(x_0, x_1)}{\pi(x_0)\pi(x_1)} + \\ \mathbb{E}_{\pi(x_0, x_1)}[\KL{r_{\pi}^{\rm joint}(\cdot|x_1, x_0)}{r_{\pi}^{\rm ind}(\cdot|x_1, x_0)}] \label{eq:proof_chain_rule_2}
\end{multline}
Recap that both $r_{\pi}^{\rm joint}(\cdot|x_1, x_0)$ and $r_{\pi}^{\rm ind}(\cdot|x_1, x_0)$ are $q^{\rm ref}(\cdot|x_0, x_1)$, see \eqref{eq:reciprocal_def}. Therefore, $\KL{r_{\pi}^{\rm joint}(\cdot|x_1, x_0)}{r_{\pi}^{\rm ind}(\cdot|x_1, x_0)} = 0$. Then by combining the \eqref{eq:proof_chain_rule_1} and \eqref{eq:proof_chain_rule_2} the following holds:
\begin{multline}
    \KL{r_\pi^{\rm joint}(\cdot)}{r_{\pi}^{\rm ind}(\cdot)} = \KL{\pi(x_0, x_1)}{\pi(x_0)\pi(x_1)} = \\\mathbb{E}_{\pi(x_0)}\left[ \KL{r_{\pi|x_0}^{\rm joint}(\cdot)}{r_{\pi|x_0}^{\rm ind}(\cdot)} \right] = \MI(X_0;X_1)
\end{multline}

Finally by presenting the conditional reciprocal processes $r_{\pi|x_0}^{\rm joint}$ and $r_{\pi|x_0}^{\rm ind}$ as the Markov chains:
\begin{multline}
    I(X_0;X_1) = \mathbb{E}_{\pi(x_0)}\left[
    \KL{r_{\pi|x_0}^{\rm joint}(\cdot)}{r_{\pi|x_0}^{\rm ind}(\cdot)} \right] =  \\ \mathbb{E}_{\pi(x_0)}\Big[ \sum_{n=1}^N \mathbb{E}_{x_{t_n}}\KL{r_\pi^{\rm joint}(x_{t_{n+1}}|x_{t_n}, x_0)}{r_\pi^{\rm ind}(x_{t_{n+1}}|x_{t_n}, x_0)} \Big]\nonumber
\end{multline}

\end{proof}

\section{Experimental Details}\label{appx:exp}

\begin{table*}[!t]
    \centering
    \begin{tabular}{lcccc}
        \toprule
        MI$_{true}$  & \ourestname{} (\textbf{ours}) & $f$-DIME-KL & $f$-DIME-H & $f$-DIME-G \\
        \midrule
        $1$ & \textbf{0.845} & 0.729 & 0.804 & 0.784 \\
        $2$ & 1.932 & 1.852 & \textbf{1.999} & 2.104 \\
        $3$ & \textbf{2.928} & 2.439 & 2.796 & 2.74 \\
        $4$ & 4.124 & 3.646 & 3.829 & \textbf{3.902} \\
        $5$ & \textbf{5.103} & 4.284 & 4.881 & 4.834  \\
        $6$ & 6.41 & 5.442 & 5.655 & \textbf{6.01}  \\
        \bottomrule
    \end{tabular}
    \caption{Results for the high-dimensional image benchmark with $16\times16$ images across all estimators. MI$_{\text{true}}$ denotes the ground-truth mutual information. All the other columns correspond to MI estimation results for a particular method. The closer to MI$_{\text{true}}$ the better. Best estimates are highlighted in \textbf{bold}.}
    \label{tab:exp_rectangle_16}
\end{table*}

\subsection{Posterior sampling}\label{appx:method_details}
% \sergei{add about posterior sampling}

%\paragraph{Parameterization via Posterior Sampling}

In line with the standard practice in diffusion models (see, e.g.~\cite{ho2020denoising}), we parameterize the transition probabilities \( r_\theta(x_{t_n} | x_{t_{n-1}}, x_0) \) using a posterior sampling scheme:
\[
r_\theta(x_{t_n} \mid x_{t_{n-1}},x_0) = \mathbb{E}_{\tilde{r}_\theta(\tilde{x}_1 \mid x_{t_{n-1}},x_0)}
\bigl[ q^{\mathrm{ref}}(x_{t_n} \mid x_{t_{n-1}}, \tilde{x}_1) \bigr],
\]
where \( \tilde{r}_\theta(\tilde{x}_1 \mid x_{t_{n-1}},x_0) \) is a learnable distribution. This parameterization assumes that sampling \( x_{t_n} \) given \( x_{t_{n-1}} \) and $x_0$ proceeds in two stages:
\begin{enumerate}
    \item Obtain probabilities of an endpoint \( \tilde{r}_\theta(\tilde{x}_1 \mid x_{t_{n-1}},x_0) \);
    \item Compute expectation of the reference process \( \mathbb{E}_{\tilde{x}_1}[q^{\mathrm{ref}}(x_{t_n} \mid x_{t_{n-1}}, \tilde{x}_1)] \) distribution and sample the next state \(x_{t_n} \).
\end{enumerate}

\subsection{General Other Methods Details.}

%In this part, we provide additional experimental details regarding other methods featured in our experiments. %~\cref{figure:compare_methods_images}.
%We report the NN architectures used for reference neural estimators in \wasyparagraph\ref{sec:image_bench}.

\paragraph{MINE, NWJ, InfoNCE.}
We use a nearly identical experimental framework to assess each approach within this category.
To approximate the critic function in experiments with synthetic data, we adopt the NN architecture from~\cite{butakov2024normflows}, which we also report in Table~\ref{table:synthetic_architecture} (``Critic NN'').

\begin{table*}[ht!]
    \center
    \caption{The NN architectures for variational methods (MINE, NWJ, InfoNCE, $f$-DIME) used to conduct the tests in \wasyparagraph\ref{sec:image_bench}. %\sergei{Write which method specifically to not confuse with out method.}
    }
    \label{table:synthetic_architecture}
    \begin{tabular}{cc}
    \toprule
    NN & Architecture \\
    \midrule
    \makecell{Critic NN, \\ low-dimensional data} &
        \small
        \begin{tabular}{rl}
            %$ \times 1 $: & Embedding(32) \\
            %$ \times 1 $: & Dense($2 \cdot \text{dim} \cdot 32$, 256), LeakyReLU(0.01) \\
            $ \times 1 $: & Dense($2 \cdot \text{dim}$, 256), LeakyReLU(0.01) \\
            $ \times 1 $: & Dense(256, 256), LeakyReLU(0.01) \\
            $ \times 1 $: & Dense(128, 1) \\
        \end{tabular} \\
    \\
    \makecell{Critic NN, \\ $ 16 \times 16 $ ($ 32 \times 32 $) \\ images} &
        \small
        \begin{tabular}{rl}
            $ \times 1 $: & [Conv2d(1, 16, ks=3), MaxPool2d(2), LeakyReLU(0.01)]$^{ \times 2 \; \text{in parallel}} $ \\
            $ \times 1(2) $: & [Conv2d(16, 16, ks=3), MaxPool2d(2), LeakyReLU(0.01)]$^{ \times 2 \; \text{in parallel}} $ \\
            $ \times 1 $: & Dense(256, 128), LeakyReLU(0.01) \\
            $ \times 1 $: & Dense(128, 128), LeakyReLU(0.01) \\
            $ \times 1 $: & Dense(128, 1) \\
        \end{tabular} \\
    \bottomrule
    \end{tabular}
\end{table*}

\paragraph{$f$-DIME} For all the $f$-DIME-H, $f$-DIME-KL, $f$-DIME-G we take the official implementation:

\begin{center}
    \url{https://github.com/nicolaNovello/fDIME}
\end{center}

\paragraph{Info-SEDD.}

We attempted to reproduce the Info-SEDD method \cite{foresti2025info} following the publicly available description, but were unable to obtain competitive performance under our experimental settings.

\subsection{Low Dimensional Benchmark Details}\label{appx:synthetic}
% \sergei{Add concrete details about the generation of row stoch matrix, exp kernel and e.t.c.}

\paragraph{Conditional Probability Matrix Generation}
The conditional probability matrix $
\Pi(x_1^d|x_0^d)$ for each dimension is generated as follows:

\begin{enumerate}
\item Create exponential kernel matrix using a Gaussian similarity measure:
\[
K_{ij} := \exp\left(-\frac{(i - j)^2}{2\sigma^2}\right), \quad i,j \in \{0,\dots,S-1\}
\]
where $\sigma$ controls the bandwidth of the kernel (we use $\sigma = 0.5$).
\item Multiply $K$ element-wise by a random matrix: 
\[
P_0 := K \odot \mathrm{Uniform}(0,1) + \epsilon,
\]
with $\epsilon = 10^{-12}$ for numerical stability.

\item Apply normalization to obtain a stochastic matrix: $\Pi(x_1^d|x_0^d) := Normalize(P_0)$.
\end{enumerate}

\paragraph{\ourestname{}}

We employ a DiT-based architecture with $\approx 300k$ parameters and following specifications:

Architecture: Embedding layer $\to$ $4$ Transformer blocks with rotary positional embeddings $\to$ linear projection. Each block has $4$ attention heads, hidden dimension $32$, conditioning dimension $32$. The approximate number of parameters in a neural network is $\approx 312k$.

Hyperparameters: Batch size $512$, learning rate $3\times10^{-4}$, $\alpha=10^{-4}$, $M = 10$ (number of inner samples in Algorithm~\ref{alg:MI_estimator}), training for $150$ epochs via KL loss.  Nvidia A100 was used to train the models.
In any setup, each run (one seed) took no longer than one GPU-hour to be completed.

% \sergei{Write the approximate number of parameters in a neural network.}

\paragraph{MINE, InfoNCE.}
%We use a nearly identical experimental framework to assess each approach within this category.
%To approximate $ T $ in experiments with synthetic images, we adopt the critic NN architecture from~\cite{butakov2024normflows}, which we also report in Table~\ref{table:synthetic_architecture} (``Critic NN''). 
%
The networks were trained via Adam optimizer with a learning rate $ 10^{-3} $, a batch size $ 512 $. 
For averaging, we used $ 5 $ different seeds.
Nvidia A100 was used to train the models.
%In any setup, each experiment ($ 6 \; \text{plot points} \times 4 \; \text{plots} \times 5 \; \text{seeds} $) took no longer than six GPU-days to be completed.
In any setup, each run (one seed) took no longer than one GPU-hour to be completed.

\paragraph{$f$-DIME}The neural network is MLP with Embedding layer to handle discrete data with $\approx 600k$ parameters. We follow the default parameters provided by the authors as much as possible: learning rate $2 \cdot 10^{-4}$, batch size $128$, number of gradient steps $50$k. Nvidia A100 was used to train the models.
In any setup, each run (one seed) took no longer than one GPU-hour to be completed.

\subsection{Image Based Benchmark}\label{appx:image_bench}

\paragraph{\ourestname{}}

We employ a UNet architecture with $\approx 6.8M$ parameters from \texttt{diffusers} library:

\begin{center}
    \url{https://huggingface.co/docs/diffusers/v0.21.0/en/api/models/unet2d}
\end{center}

Architecture: 2-channel input $\to$ Downsampling block (2 ResNet layers, 128 channels) $\to$ Downsampling block with spatial self-attention (2 ResNet layers, 128 channels) $\to$ Upsampling block with spatial self-attention (2 ResNet layers, 128 channels) $\to$ Upsampling block (2 ResNet layers, 128 channels) $\to$ 2-channel output The conditioning is done via concatenation of $x_{t_{n-1}}$ and $x_1$ by channel dimension.

Hyperparameters: batch size $128$, learning rate $3\times10^{-4}$, $\alpha=10^{-2}$, and $M=10$ inner samples in Algorithm~\ref{alg:MI_estimator}. We train for $30$ epochs using the KL objective, with an additional cross-entropy term weighted by $10^{-3}$ (see \cite[\wasyparagraph3.4]{austin2021structured}). 

Nvidia A100 was used to train the models.
In any setup, each run (one seed) took no longer than one GPU-hour to be completed.
\begin{figure}[!t]
    \centering

    \begin{subfigure}[t]{0.24\linewidth}
        \centering
        \includegraphics[width=\linewidth]{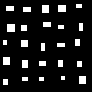}
        \caption{$x_0\sim\pi(x_0)$ }
    \end{subfigure}
    \begin{subfigure}[t]{0.24\linewidth}
        \centering
        \includegraphics[width=\linewidth]{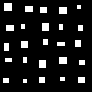}
        \caption{$x_1\sim\pi(x_1)$}
    \end{subfigure}
    \begin{subfigure}[t]{0.24\linewidth}
        \centering
        \includegraphics[width=\linewidth]{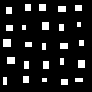}
        \caption{$x_1\sim  r_{\theta}(\cdot, 0)$}
    \end{subfigure}
    \begin{subfigure}[t]{0.24\linewidth}
        \centering
        \includegraphics[width=\linewidth]{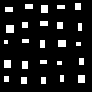}
        \caption{$x_1\sim  r_{\theta}(\cdot, 1)$}
    \end{subfigure}

    \caption{Qualitative samples, presented in $5\times2$ image grids, generated using image benchmark and \ourestname{}, i.e.,  $r_{\theta}(\cdot)$, at resolution $16\times16$.}
    \label{fig:rectangles_16}
\end{figure}

\paragraph{MINE, NWJ.}
The networks were trained via Adam optimizer
with a learning rate $ 10^{-3} $, a batch size $ 512 $. 
For averaging, we used $ 5 $ different seeds.
Nvidia A100 was used to train the models.
In any setup, each run (one seed) took no longer than two GPU-hours to be completed.
%The size of discriminator neural network is around $50$k parameters.

\begin{figure}[!t]
    \centering
    \includegraphics[width=0.4\linewidth]{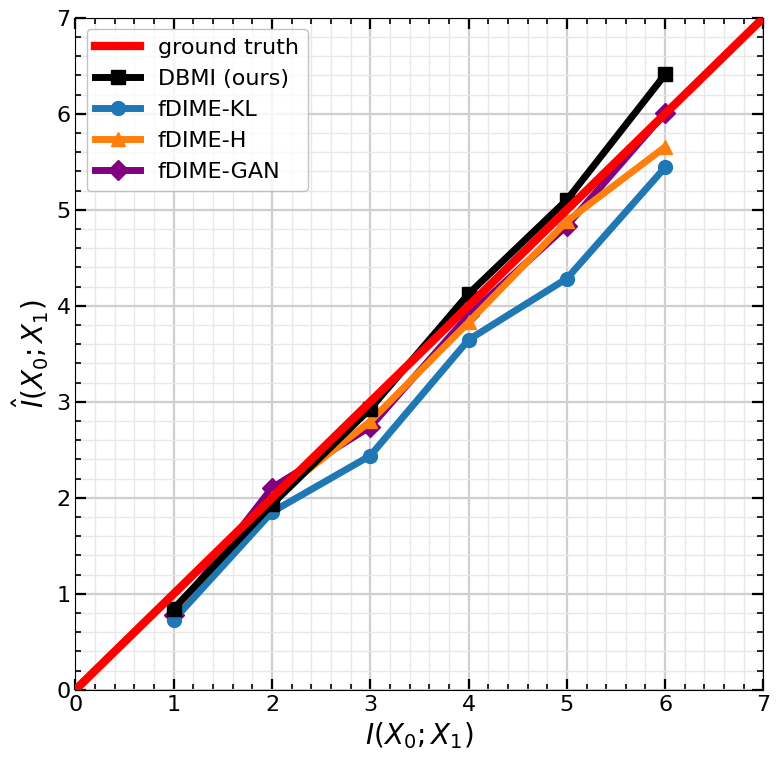}
    \caption{Comparison of estimated mutual information $\hat{\MI}(X_0; X_1)$ across methods against the ground-truth $\MI(X_0; X_1)$ (\textcolor{red}{red}) on a high-dimensional image benchmark with size $16\times16$.} 
    % Shaded regions denote 99\% confidence intervals computed across multiple random-seed runs.}
    \label{fig:exp_rectangle_16}
\end{figure}

\paragraph{$f$-DIME}
The neural network architectures for $32\times32$ and $16\times16$ image resolution setup are presented in Table~\ref{table:synthetic_architecture}, . We follow the default parameters provided by the authors as much as possible: learning rate $2 \cdot 10^{-4}$, batch size $128$, number of gradient steps $50$k. Nvidia A100 was used to train the models.
In any setup, each run (one seed) took no longer than one GPU-hour to be completed.

\section{Additional Experiments}\label{appx:add_exp}

\paragraph{Image Benchmark $16\times16$.} We test our \ourestname{} and $f$-DIME on image benchmark \wasyparagraph\ref{sec:image_bench} but with images being $16\times16$. In that case latent variables $\{X^i_0, X^i_1\}$ are constructed in the same way, but alphabet of discrete random variable is smaller, i.e., $V=5$. As well as minimum side length for rectangle $V^{\rm min}=5$. We vary ground truth mutual information: $MI_{\rm true} = \{1, 2, 3, 4, 5, 6\}$ and one can see the experiment results in Table~\ref{tab:exp_rectangle_16}, Figure~\ref{fig:exp_rectangle_16} and Figure~\ref{fig:rectangles_16}.

It is evident that both our \ourestname{} and $f$-DIME do solve the problem.

\end{document}